\newcommand{\bmx}{\bm{x}}
\newcommand{\bmy}{\bm{y}}
\newcommand{\bmz}{\bm{z}}
\newcommand{\bmr}{\bm{r}}
\newcommand{\bmf}{\bm{f}}
\newcommand{\omm}{OneMinMax}
\newcommand{\lotz}{LeadingOnesTrailingZeroes}
\newcommand{\lo}{\mathrm{LO}}
\newcommand{\tz}{\mathrm{TZ}}
\newtheorem{theorem}{Theorem}
\newtheorem{definition}{Definition}
\title{An Archive Can Bring Provable Speed-ups \\in Multi-Objective Evolutionary Algorithms
}
\author{
	Anonymous
}
\author{
Chao Bian$^1$
\and
Shengjie Ren$^1$\and
Miqing Li$^{2}$\And
Chao Qian$^1$
\affiliations
$^1$National Key Laboratory for Novel Software Technology, Nanjing University, Nanjing 210023, China\\
School of Artificial Intelligence, Nanjing University, Nanjing 210023, China\\
$^2$School of Computer Science, University of Birmingham,
Birmingham B15 2TT, U.K.
\emails
\{bianc, qianc\}@lamda.nju.edu.cn, shengjieren36@gmail.com,
m.li.8@bham.ac.uk
}
\begin{document}

\maketitle

\begin{abstract}
    In the area of multi-objective evolutionary algorithms (MOEAs), there is a trend of using an archive to store non-dominated solutions generated during the search. This is because 1) MOEAs may easily end up with the final population containing inferior solutions that are dominated by other solutions discarded during the search process and 2) the  population that has a commensurable size of the problem's Pareto front is often not practical. In this paper, we theoretically show, for the first time, that using an archive can guarantee speed-ups for MOEAs. Specifically, we prove that for two well-established MOEAs (NSGA-II and SMS-EMOA) on two commonly studied problems (OneMinMax and LeadingOnesTrailingZeroes), using an archive brings a polynomial acceleration on the expected running time. The reason is that with an archive, the size of the population can reduce to a small constant; there is no need for the population to keep all the Pareto optimal solutions found. This contrasts existing theoretical studies for MOEAs where a population with a commensurable size of the problem's Pareto front is needed. The findings in this paper not only provide a theoretical confirmation for an increasingly popular practice in the design of MOEAs, but can also be beneficial to the theory community towards studying more practical MOEAs. 
\end{abstract}

\section{Introduction}

Multi-objective optimization refers to an optimization scenario of considering multiple objectives simultaneously. Since the objectives of a multi-objective optimization problem (MOP) are usually conflicting, there does not exist a single optimal solution, but a set of trade-off solutions, called Pareto optimal solution set (or Pareto front in the objective space). Evolutionary algorithms (EAs)~\cite{back:96}, a kind of randomized heuristic optimization algorithms inspired by natural evolution, have been found well-suited to MOPs. Their population-based search mechanism can approximate a set of Pareto optimal solutions within one execution, with one solution representing a different trade-off between the objectives. Over the last decades, there have been a lot of well-known multi-objective EAs (MOEAs) developed, 
such as the non-dominated sorting genetic algorithm II (NSGA-II)~\cite{deb-tec02-nsgaii}, multi-objective evolutionary algorithm based on decomposition (MOEA/D)~\cite{zhang2007moea}, and $\mathcal{S}$ metric selection evolutionary multi-objective optimization algorithm (SMS-EMOA)~\cite{beume2007sms}.

In the MOEA area, recently there is a trend of using an archive to store non-dominated solutions generated during the search~\cite{li2023multi}. A major reason for this practice is that the evolutionary population in MOEAs may fail to preserve high-quality solutions found even with elite preservation~\cite{knowles2004bounded}. During the evolutionary process, it is highly likely that there are more non-dominated solutions generated than the capacity of the population. As such, a population truncation needs to take place to remove excess non-dominated solutions (e.g., on the basis of their crowdedness in the population). However, the population later may accept new solutions which are non-dominated to the current population but being dominated by the ones removed previously (due to the dynamics of the evolution, for example, some area becomes sparse again). Consequently, MOEAs may end up with the final population having a large portion of (globally) dominated solutions. This applies to all practical MOEAs~\cite{li2023multi}. For example, it has been reported in~\cite{li2019empirical} that on some problems (such as DTLZ7~\cite{deb2005scalable}), nearly half of the final population of NSGA-II and MOEA/D are not globally optimal (i.e., being dominated by other solutions which were discarded during their evolutionary process).
This unwelcome issue has been frequently observed in various scenarios, from synthetic test suites~\cite{laumanns2002combining,fieldsend2003using,li2019empirical} to real-world problems~\cite{fieldsend2017university,chen2019standing}.  

\begin{table*}[t]
	\centering
	\begin{tabular}{l|l|l}
		\toprule
            & \omm & \lotz  \\ \midrule
            \makecell[l]{Original NSGA-II, namely, the one without\\ using an archive (\cite{bian2022better}) } & $O(\mu n\log n)$ [$\mu\ge 2(n+1)$] & $O(\mu n^2)$ [$\mu\ge 2(n+1)$] \\ \midrule
		 NSGA-II using an archive (this paper) & $O(\mu n\log n)$ [Thm~\ref{thm:nsga-arc-omm}, $\mu\ge 4$] & $O(\mu n^2+\mu^2 n\log n)$ [Thm~\ref{thm:nsga-arc-lotz}, $\mu\ge 4$] \\ \midrule
          \makecell[l]{Original SMS-EMOA, namely, the one without \\using an archive (\cite{zheng2024sms}) } & $O(\mu n\log n)$ [$\mu\ge n+1$] & $O(\mu n^2)$ [$\mu\ge n+1$] \\ \midrule
        SMS-EMOA using an archive (this paper) & $O(\mu n\log n)$ [Thm~\ref{thm:sms-arc-omm}, $\mu\ge 2$]  &  $O(\mu n^2+\mu^2 n\log n)$ [Thm~\ref{thm:sms-arc-lotz}, $\mu\ge 2$] \\ 
        \bottomrule
	\end{tabular} 
 \caption{The expected number of  fitness evaluations of NSGA-II and SMS-EMOA for solving the \omm\ and \lotz\ problems when an archive is used or not, where $n$ denotes the problem size, and $\mu$ denotes the population size.}
	\label{tab:summary}
\end{table*}

An easy way to fix the above issue is to use an (unbounded) archive that stores all non-dominated solutions found, leaving the search focused on finding new non-dominated solutions. This is indeed an increasingly popular practice~\cite{li2023multi}, given the capacity of today's computers that storing millions of solutions does not pose a problem. Various studies of using an unbounded archive emerged~\cite{li2023multi}, including to store high-quality solutions~\cite{dubois2015,ishibuchi2020,zhang2023dual}, to incorporate it into MOEAs as an important algorithm component~\cite{wang2019,li2021two}, to benchmark different MOEAs~\cite{tanabe2017,brockhoff2019}, to use it to identify if the search stagnates~\cite{li2023b}, and to design efficient data structure for it~\cite{glasmachers2017,jaszkiewicz2018,fieldsend2020,nan2020}. An archive (even a bounded one) can significantly improve the performance of MOEAs, as shown empirically in~\cite{bezerra2019}.

In this paper, we theoretically show that using an archive can bring speed-ups for MOEAs. Specifically, we study the expected running time of two well-established MOEAs, NSGA-II and SMS-EMOA, when using an archive to store non-dominated solutions generated during the search. We consider two bi-objective optimization problems, OneMinMax and LeadingOnes-TrailingZeroes, which were commonly used in theoretical studies of MOEAs~\cite{LaumannsTEC04,doerr2013lower,nguyen2015sibea,bian2022better,zheng2023first}.

The results are given in Table~\ref{tab:summary}. As can be seen in the table, using an archive allows a small constant population size, which brings an acceleration of $\Theta(n)$ on the expected running time. Note that the original algorithms without using an archive require the population size $\mu$ to be at least the same size of the problem’s Pareto front (i.e., $\mu \geq 2(n+1)$ for NSGA-II and $\mu \geq n+1$ for SMS-EMOA, where $n+1$ is the size of the Pareto front), otherwise Pareto optimal solutions can be lost even if they have been found previously. Using an archive that stores all the Pareto optimal solutions generated enables the algorithms not to worry about losing Pareto optimal solutions, but only endeavoring to seek new Pareto optimal solutions, thus speeding up the search process.

Over the past two decades, there is a substantial number of theoretical studies in the area of MOEAs, particularly regarding their running time complexity analyses. It starts with the analysis of a simple evolutionary multi-objective optimizer (SEMO) and its global variant, GSEMO, for solving multi-objective synthetic and combinatorial optimization problems~\cite{Giel03,LaumannsTEC04,Neumann07,Giel10,Neumann10,doerr2013lower,Qian13,bian2018tools}. On the other side, based on SEMO and GSEMO, the effectiveness of several parent selection and reproduction methods has also been studied~\cite{LaumannsTEC04,friedrich2011illustration,Qian13,qian-ppsn16-hyper,doerr2021ojzj}.

Recently, attention is being shifted to the analysis of practical MOEAs. The expected running time of $(\mu+1)$ SIBEA, i.e., an MOEA using the hypervolume indicator to update the population, was analyzed on several synthetic problems~\cite{brockhoff2008analyzing,nguyen2015sibea,doerr2016runtime}. Very recently, Zheng and Doerr~\shortcite{zheng2023first} analyzed the expected running time of NSGA-II, for the first time, by considering the bi-objective \omm\ and \lotz\ problems. Since then, the effectiveness of different components and mechanisms in NSGA-II, 
e.g., crowding distance~\cite{zheng2022current}, stochastic tournament selection~\cite{bian2022better}, fast mutation~\cite{doerr2023ojzj} and crossover~\cite{dang2023crossover,doerr2023crossover}, has also been analyzed.
More results about NSGA-II include \cite{cerf2023first,doerr2023ojzj,doerr2023lower,zheng2023manyobj}.
Furthermore, the expected running time of other well-established MOEAs has also been analyzed, e.g., MOEA/D~\cite{huang2019running,huang2020runtime,huang2021runtime}, SMS-EMOA~\cite{bian23stochastic,zheng2024sms}, and NSGA-III~\cite{wietheger23nsgaiii}, in addition to the analysis of them under different optimization models such as under noise~\cite{dang2023analysing,dinot2023runtime} and with the interactive model~\cite{lu2024imoea}.

Yet, all the above work regarding practical MOEAs needs a population with a commensurable size of the problem’s Pareto front. This may not be very practical since one may not be able to know the size of the problem's Pareto front before the optimization. The proposed work in this paper addresses this issue and proves that a small population, with an archive, even works better. This result not only provides a theoretical confirmation for an increasingly popular practice in the development of MOEAs, but can also be beneficial to the theory community towards studying more practical MOEAs.


\section{Preliminaries}\label{sec-preliminary}

In this section, we first give basic concepts in multi-objective optimization, which is followed by the considered algorithms NSGA-II and SMS-EMOA, and the archive mechanism. Lastly, we describe the \omm\ and \lotz\ problems studied in this paper.

\subsection{Multi-objective Optimization}

Multi-objective optimization aims to optimize two or more objective functions simultaneously, as presented in Definition~\ref{def_MO}. In this paper, we consider maximization (minimization can be defined similarly), and pseudo-Boolean functions, i.e., the solution space $\mathcal{X}=\{0,1\}^n$. Since the objectives are usually conflicting, there does not exist canonical complete order in the solution space $\mathcal{X}$, and we use the \emph{domination} relationship in Definition~\ref{def_Domination} to compare solutions. A solution is \emph{Pareto optimal} if it is not dominated by any other solution in $\mathcal{X}$, and the set of objective vectors of all the Pareto optimal solutions is called the \emph{Pareto front}. The goal of multi-objective optimization is to find the Pareto front or its good approximation.

\begin{definition}[Multi-objective Optimization]\label{def_MO}
	Given a feasible solution space $\mathcal{X}$ and objective functions $f_1,f_2,\ldots, f_m$, multi-objective optimization can be formulated as
	\[
	\max_{\bmx\in
		\mathcal{X}}\bmf(\bmx)=\max_{\bmx \in
		\mathcal{X}} \big(f_1(\bmx),f_2(\bmx),...,f_m(\bmx)\big).
	\]
\end{definition}
\begin{definition}[Domination]\label{def_Domination}
	Let $\bm f = (f_1,f_2,\ldots, f_m):\mathcal{X} \rightarrow \mathbb{R}^m$ be the objective vector. For two solutions $\bmx$ and $\bmy\in \mathcal{X}$:
	\begin{itemize}
		\item $\bmx$ \emph{weakly dominates} $\bmy$  (denoted as $\bmx \succeq \bmy$) if for any $1 \leq i \leq m, f_i(\bmx) \geq f_i(\bmy)$;
		\item $\bmx$ \emph{dominates} $\bmy$ (denoted as $\bmx\succ \bmy$) if $\bm{x} \succeq \bmy$ and $f_i(\bmx) > f_i(\bmy)$ for some $i$;
		\item  $\bmx$ and $\bmy$ are \emph{incomparable} if neither $\bmx\succeq \bmy$ nor $\bmy\succeq \bmx$.
	\end{itemize}
\end{definition}

\subsection{NSGA-II and SMS-EMOA }

\begin{algorithm}[tb]
	\caption{NSGA-II}
	\label{alg:nsgaii}
	\textbf{Input}: objective functions $f_1,f_2\ldots,f_m$, population size $\mu$, probability $p_c$ of using crossover
    \begin{algorithmic}[1]
		\STATE $P\leftarrow \mu$ solutions uniformly and randomly selected from $\{0,\! 1\}^{\!n}$ with replacement;
		\WHILE{criterion is not met}
		\STATE let $P'=\emptyset, i=0$;  
		\WHILE{$i<\mu/2$}
		\STATE apply binary tournament selection twice to select two solutions $\bmx$ and $\bmy$;
		\STATE sample $u$ from uniform distribution over $[0,1]$;
		\IF{$u<p_c$}
		\STATE apply one-point crossover on $\bmx$ and $\bmy$ to generate two solutions $\bmx'$ and $\bmy'$
		\ELSE 
		\STATE set $\bmx'$ and $\bmy'$ as copies of $\bmx$ and $\bmy$, respectively
		\ENDIF
		\STATE apply bit-wise mutation on $\bmx'$ and $\bmy'$ to generate $\bmx''$ and $\bmy''$, respectively, and add them into $P'$;
		\STATE $i=i+1$
		\ENDWHILE
		\STATE partition $P\cup P'$ into non-dominated sets $R_1,\ldots, R_v$;
		\STATE let $P=\emptyset$, $i=1$;
		\WHILE{$|P\cup R_i|<\mu$}
		\STATE $P=P\cup R_i$, $i=i+1$
		\ENDWHILE
		\STATE  assign each solution in $R_i$ with a crowding distance; 
		\STATE  sort the solutions in $R_i$ in ascending order by crowding distance, and add the last $\mu-|P|$ solutions into $P$
		\ENDWHILE
		\RETURN $P$
	\end{algorithmic}
\end{algorithm}

The NSGA-II algorithm~\cite{deb-tec02-nsgaii}, as presented in Algorithm~\ref{alg:nsgaii}, is a very popular MOEA which incorporates two substantial features, i.e., non-dominated sorting and crowding distance. It starts from an initial population of $\mu$ (without loss of generality, we assume that $\mu$ is even) random solutions (line~1). 
In each generation, NSGA-II employs binary tournament selection to select parent solutions (line~5), which picks two solutions randomly from the population $P$ with replacement, and then selects a better one as the parent solution (ties broken uniformly).
Then, one-point crossover is performed on the two parent solutions with probability $p_c$ (lines~6--11), which selects a  crossover point $i\in \{1,2,\ldots,n\}$ uniformly at random,  where $n$ is the problem size, and then exchanges the first $i$ bits of two solutions. The bit-wise mutation operator, which flips each bit of a solution independently with probability $1/n$, is then applied to generate offspring solutions (line~12).
After a set $P'$ of $\mu$ offspring solutions have been generated, the solutions in the current and offspring populations are partitioned into non-dominated sets $R_1,\ldots,R_v$ (line~15), where $R_1$ contains all the non-dominated solutions in $P\cup P'$, and $R_i$ ($i\ge 2$) contains all the non-dominated solutions in $(P\cup P') \setminus \cup_{j=1}^{i-1} R_j$.  Note that  a solution is said to be with rank $i$ if it belongs to $R_i$. Then, the solutions in $R_1,\ldots, R_v$ are added into the next population, until the population size exceeds $\mu$ (lines~16--19). For the critical set $R_i$ whose inclusion makes the population size larger than $\mu$, the crowding distance is computed for each of the contained solutions (line~20).
Crowding distance reflects the diversity of a solution. For each objective $f_j $, $1\le j\le m$, the solutions in $R_i$ are sorted according to their objective values in ascending order, and we assume the sorted list is $\bmx^1,\bmx^2,\ldots,\bmx^k$; the crowding distance of the solution $\bmx^l$ with respect to $f_j$ is set to $\infty$ if $l\in \{1,k\}$, and $(f_j(\bmx^{l+1})-f_j(\bmx^{l-1}))/(f_j(\bmx^k)-f_j(\bmx^1))$ otherwise.
The final crowding distance of a solution is the sum of the crowding distance with respect to each objective. 
Finally, the solutions in $R_i$ are selected to fill the remaining population slots where the solutions with larger crowding distance are preferred (line~21). 
Note that when using binary tournament selection in line~5, the selection criterion is based on rank and crowding distance, that is, a solution $\bmx$ is superior to $\bmy$ if $\bmx$ has a smaller rank, or $\bmx$ and $\bmy$ have the same rank but $\bmx$ has a larger crowding distance than $\bmy$. The probability $p_c$ of using crossover is set  to $0.9$, just as the  setting in~\cite{deb-tec02-nsgaii}.

\begin{algorithm}[tb]
	\caption{SMS-EMOA}
	\label{alg:sms-emoa}
	\textbf{Input}: objective functions $f_1,f_2\ldots,f_m$, population size $\mu$, probability $p_c$ of using crossover 
	\begin{algorithmic}[1] 
		\STATE $P\leftarrow \mu$ solutions uniformly and randomly selected from $\{0,\! 1\}^{\!n}$ with replacement;
		\WHILE{criterion is not met}
		\STATE select a  solution $\bmx$ from $P$ uniformly at random;
		\STATE sample $u$ from uniform distribution over $[0,1]$;
		\IF{$u<p_c$}
		\STATE select a  solution $\bmy$ from $P$ uniformly at random;
		\STATE apply one-point crossover on $\bmx$ and $\bmy$ to generate one solution $\bmx'$
		\ELSE 
		\STATE set $\bmx'$ as the copy of $\bmx$
		\ENDIF
		\STATE apply bit-wise mutation on $\bmx'$ to generate $\bmx''$;
		\STATE partition $P\!\cup \!\{\bmx''\!\}$ into non-dominated sets $R_1\!,...,\!R_v$;
		\STATE let $\bmz=\arg\min_{\bmx\in R_v}\Delta_{\bmr}(\bmx,R_v)$;
		\STATE $P\leftarrow (P\cup \{\bmx''\})\setminus \{\bmz\}$
		\ENDWHILE
		\RETURN $P$
	\end{algorithmic}
\end{algorithm}

The SMS-EMOA algorithm~\cite{beume2007sms} as presented in Algorithm~\ref{alg:sms-emoa} is also a popular MOEA, which  employs non-dominated sorting and hypervolume indicator to update the population.
Starting from an initial population of $\mu$ random solutions (line~1), in each generation, it  randomly selects a parent solution $\bmx$ from the current population for reproduction (line~3). With probability $p_c$ (similar to NSGA-II, $p_c$ is set to $0.9$), it selects another solution $\bmy$ and applies one-point crossover on $\bmx$ and $\bmy$ to generate an offspring solution $\bmx'$ (lines~4--7); otherwise, $\bmx'$ is set as the  copy of $\bmx$ (line~9). Note that one-point crossover actually produces two solutions, but the algorithm only picks the one that consists of the first part of the first parent solution and the second part of the second parent solution. Afterwards, bit-wise mutation is applied on $\bmx'$ to generate one offspring solution (line~11). 
Then, similar to line~15 of Algorithm~\ref{alg:nsgaii}, the union of the current population and the newly generated offspring solution is partitioned into non-dominated sets $R_1,\ldots,R_v$ (line~12), and one solution $\bmz\in R_v$ that minimizes
$
\Delta_{\bmr}(\bmx, R_v):=HV_{\bmr}(R_v)-HV_{\bmr}(R_v\setminus \{\bmx\})
$
is removed (lines~13--14), where $HV_{\bmr}(X)
	=\Lambda
	\big(\cup_{\bmx\in X} 
	\{\bmf'\in \mathbb{R}^m \mid 
	\forall 1\le i\le m: 
	r_i\le f'_i\le f_i(\bmx)\}\big) $
denotes the hypervolume of a solution set $X$ with respect to a reference point $\bmr\in \mathbb{R}^m$ (satisfying $\forall 1\le i\le m, r_i\le \min_{\bmx\in \mathcal{X}}f_i(\bmx)$), i.e., the volume of the objective space between the reference point and the objective vectors of the solution  set, and $\Lambda$ denotes the Lebesgue measure. 
A larger hypervolume value implies a better approximation with regards to both convergence and diversity. 
Note that when SMS-EMOA solves bi-objective problems, we use the original setting in~\cite{beume2007sms} that the two extreme points (i.e., the objective vectors which contain the largest objective value for some $f_i$, $i\in \{1,2\}$) are always kept in the population regardless of their hypervolume loss.

\subsection{NSGA-II and SMS-EMOA with an Archive}
In the original NSGA-II and SMS-EMOA, there is no archive used. As explained previously, the non-dominated solutions may lose even if they have been found once. Using an archive can easily address this issue. That is, once a new solution is generated, the solution will be tested if it can enter the archive. If there is no solution in the archive that dominates the new solution, then the solution will be placed in the archive. Additional algorithmic steps incurred by adding an archive in NSGA-II and SMS-EMOA are given as follows. 
For NSGA-II in Algorithm~\ref{alg:nsgaii}, an empty set $Q$ is  initialized in line~1, the following lines
\begin{framed}\vspace{-0.8em}
    \begin{algorithmic}
    \FOR{$\bmx''\in P'$}
    \IF{$\nexists \bmz \in Q$ such that $\bmz \succ \bmx''$}
    \STATE $Q \leftarrow (Q \setminus\{\bmz \in Q \mid \bmx'' \succeq \bmz\}) \cup \{\bmx''\}$
    \ENDIF
    \ENDFOR
\end{algorithmic}\vspace{-0.8em}
\end{framed}\noindent
are added after line~14, and the set $Q$ instead of $P$ is returned in the last line. For SMS-EMOA in Algorithm~\ref{alg:sms-emoa}, an empty set $Q$ is also initialized in line~1, 
the following lines
\begin{framed}\vspace{-0.8em}
    \begin{algorithmic}
    \IF{$\nexists \bmz \in Q$ such that $\bmz \succ \bmx''$}
    \STATE $Q \leftarrow (Q \setminus\{\bmz \in Q \mid \bmx'' \succeq \bmz\}) \cup \{\bmx''\}$
    \ENDIF
\end{algorithmic}\vspace{-0.8em}
\end{framed}\noindent
are added after line~11, and the set $Q$ instead of $P$ is returned in the last line. 

\subsection{\omm\ and \lotz}

Now we introduce two bi-objective problems, \omm\ and \lotz\ considered in this paper. Theses two problems have been widely used in MOEAs’ theoretical analyses~\cite{LaumannsTEC04,brockhoff2008analyzing,doerr2013lower,nguyen2015sibea,bian2022better,zheng2023first}. 

The \omm\ problem presented in Definition~\ref{def:OMM} aims to simultaneously maximize the number of 0-bits and the number of 1-bits of a binary bit string.  
The Pareto front is $\{(a, n-a)\mid a\in [0..n]\}$, whose size is $n+1$, and 
the Pareto optimal solution corresponding to $(a, n-a)$, $a\in [0..n]$, is any solution with $(n-a)$ 1-bits. Note that we use $[l..r]$ to denote the set  $\{l,l+1,\ldots, r\}$ of integers throughout the paper. We can see that any solution $\bmx\in\{0,1\}^n$ is Pareto optimal for this problem.
\begin{definition}[\omm~\cite{Giel10}]\label{def:OMM}
	The OneMinMax problem of size $n$ is to find $n$ bits binary strings which maximize
        $
		{\bm f}(\bmx)=\left(n-\sum\nolimits^n_{i=1}x_i, \sum\nolimits^{n}_{i=1} x_i\right)
        $,
	where $x_i$ denotes the $i$-th bit of $\bmx \in \{0,1\}^n$.
\end{definition}

The \lotz\  problem presented in Definition~\ref{def:LOTZ} aims to simultaneously maximize the number of leading 1-bits and the number of trailing 0-bits of a binary bit string.  
The Pareto front is $\{(a, n-a)\mid a\in [0..n]\}$, whose size is $n+1$, and the Pareto optimal solution corresponding to $(a,n-a)$,  $a\in [0..n]$, is $1^a0^{n-a}$, i.e., the solution with $a$ leading 1-bits and $n-a$ trailing 0-bits.
\begin{definition}[\lotz~\cite{LaumannsTEC04}]\label{def:LOTZ}
	The \lotz\ problem of size $n$ is to find $n$ bits binary strings which maximize
        $
        	{\bm{f}}(\bmx)= (\sum\nolimits^n_{i=1} \prod\nolimits^{i}_{j=1}x_j, \sum\nolimits^{n}_{i=1} \prod\nolimits^{n}_{j=i}(1-x_j))
         $,
	where $x_j$ denotes the $j$-th bit of $\bmx \in \{0,1\}^n$.
\end{definition}

\section{Analysis of NSGA-II with an Archive}

In this section, we analyze the expected running time of NSGA-II in Algorithm~\ref{alg:nsgaii} using an archive.
Note that the running time of an EA is usually measured by the number of fitness evaluations, which is often the most time-consuming step in the evolutionary process.
We prove in Theorem~\ref{thm:nsga-arc-omm} that the expected number of fitness evaluations of NSGA-II using an archive for solving \omm\ is $O(\mu n\log n)$, where the population size $\mu \ge 4$. The proof idea is to divide the optimization procedure into two phases, where the first phase aims at finding the two extreme Pareto optimal solutions $1^n$ and $0^n$, and the second phase aims at finding the remaining objective vectors in the Pareto front.

\begin{theorem}\label{thm:nsga-arc-omm}
	For NSGA-II solving \omm, if using an archive, and a population size $\mu$ such that $\mu\ge 4$, then the expected number of fitness evaluations for finding the Pareto front is $O(\mu  n\log n)$.
\end{theorem}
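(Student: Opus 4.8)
The plan is to track two separate quantities over time: the number of distinct Pareto-optimal objective vectors stored in the archive $Q$, and the progress of the evolutionary population $P$ towards producing ``new'' vectors. Since every solution of $\omm$ is Pareto optimal, and the archive only ever discards a solution when a solution with the same objective vector (or a dominating one, which cannot happen here) arrives, once a vector $(a,n-a)$ is generated it is represented in $Q$ forever. Hence it suffices to bound the expected time until all $n+1$ vectors $(0,n),(1,n-1),\dots,(n,0)$ have appeared among the offspring. I would split this into the two phases announced in the theorem: Phase~1, reach both extremes $0^n$ and $1^n$; Phase~2, fill in every intermediate vector.

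For Phase~1, the key observation is the standard one for $\omm$ with a constant-size population: the population cannot ``lose ground'' on the extremes in a harmful way, because a solution is selected in binary tournament selection with probability bounded below by a constant (with $\mu\ge 4$ this is fine), and mutation flips exactly one of the $k$ ``wrong'' bits to decrease (resp.\ increase) the number of ones with probability $\tfrac{k}{n}(1-1/n)^{n-1}=\Omega(k/n)$. The subtlety here versus the classical GSEMO analysis is that NSGA-II's population may not retain the current best-on-$f_2$ individual once $R_1$ overflows; I would argue that because $\mu$ is constant, with constant probability the critical front $R_1$ has few enough members that the extreme-valued individuals, which always get crowding distance $\infty$, survive truncation --- indeed the two individuals attaining the max of $f_1$ and of $f_2$ on the first front always receive infinite crowding distance and are never removed. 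So an analogue of a multiplicative-drift / coupon-collector argument over the ``number of ones of the current $f_2$-best member'' yields an $O(\mu n\log n)$ bound for both extremes (the $\mu$ factor coming from the $\mu$-fold cost per generation).

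For Phase~2, once $0^n$ and $1^n$ are both in $Q$, I would like to say that they are also (with good probability) maintained in the population $P$: since they have extremal objective values, whenever they are in $P\cup P'$ they sit on the first front with crowding distance $\infty$ and survive. Then, from an individual in $P$ with $i$ ones, a single bit-flip produces a neighbour with $i-1$ or $i+1$ ones, each with probability $\Omega(i/n)$ or $\Omega((n-i)/n)$; iterating, the ``frontier'' of discovered vectors expands. The cleanest way to package this is: let $j$ be the number of still-missing vectors; because the missing vectors form a contiguous-ish region and $P$ always contains individuals adjacent to the missing region (the extremes, plus whatever was most recently found), in each generation the probability of discovering at least one new vector is $\Omega(1/n)$ (there is always a witnessing parent whose one-bit-flip neighbour is new, selected with constant probability and mutated correctly with probability $\Omega(1/n)$). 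Summing $1/\Omega(1/n)$ over the $n+1$ vectors, times $\mu$ evaluations per generation, gives $O(\mu n\log n)$ --- actually to get the $\log n$ rather than $n^2$ one needs the sharper bound that when there are many missing vectors near an endpoint the per-vector discovery probability is $\Omega(k/n)$ where $k$ is the distance from the nearest found vector to the endpoint, producing a harmonic sum $\sum_k n/k = O(n\log n)$.

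The main obstacle, and the place I expect to spend real effort, is Phase~2: rigorously controlling which individuals NSGA-II keeps in its size-$\mu$ population. With $\mu$ only a constant, after non-dominated sorting \emph{all} of $P\cup P'$ may lie on the first front, so the crowding-distance truncation is doing all the work, and one must ensure that the retained $\mu$ individuals still include at least one parent adjacent to the missing region. I would handle this by a case analysis or a potential-function argument: show that the set of objective values present in $P$ never ``shrinks back'' across a discovered vector --- concretely, track $\LOmax := \max\{\text{\#ones in }P\}$ and $\Omax := n - \min\{\text{\#ones in }P\}$ (reusing the notation spirit of $\LOmax,\Omax$ from the macros) and prove these are non-decreasing, using that the crowding-distance procedure keeps the two $f_j$-extremal members of the first front. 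If that monotonicity holds, then once the population's span reaches $\{0,\dots,n\}$ ones it never retreats, and the discovery of intermediate vectors reduces to the coupon-collector estimate above. An alternative fallback, if the monotonicity is awkward, is to re-derive everything from the archive alone plus a ``restart-style'' argument that the extremes are re-found in $O(\mu n\log n)$ each time they are lost; but I expect the monotonicity route to be the intended and cleaner one.
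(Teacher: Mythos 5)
Your Phase~1 is essentially the paper's argument (the max-$f_1$ and max-$f_2$ members of the first front get infinite crowding distance, at most four solutions can have infinite crowding distance, so with $\mu\ge 4$ the extremes are never truncated, and one-bit flips plus a harmonic sum give $O(n\log n)$ generations). The genuine gap is in Phase~2. Your mechanism for discovering the missing intermediate vectors is single-bit mutation from a population member ``adjacent to the missing region,'' and your proposed fix is to prove monotonicity of the population's span ($\LOmax$ and $\Omax$ non-decreasing). But that monotonicity only guarantees that copies of $0^n$ and $1^n$ stay in $P$; it says nothing about intermediate solutions. With constant $\mu$, a newly created solution with $j\in[1..n-1]$ ones has \emph{finite} crowding distance and can be discarded by the truncation in the very generation it is created (e.g.\ $P$ can persistently consist of copies of the two extremes plus a couple of solutions all of whose one-bit neighbours are already archived). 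Then there is no ``witnessing parent whose one-bit-flip neighbour is new'': reaching a missing vector with $k$ ones by mutation from $0^n$ or $1^n$ requires flipping $\Theta(\min(k,n-k))$ bits, which has probability roughly $1/\min(k,n-k)!$ and is useless for the middle of the front. So the coupon-collector/frontier-expansion estimate is not justified by anything you retain, and the fallback (``re-find the extremes when lost'') does not address this at all, since the extremes are not the problem.

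What actually closes this gap in the paper is the one-point crossover (applied with probability $p_c=0.9$), which your proposal never uses: since $0^n$ and $1^n$ \emph{are} provably kept in $P$, crossing them at a uniformly random cut point $k$ produces a solution with exactly $k$ ones, i.e.\ every still-missing Pareto-front point is reachable in a single step with probability $\Theta(1/n)$ each. Together with a case analysis on how often an extreme is picked by binary tournament selection (either an extreme occupies at least $\mu/4$ of $P$, or with probability $\ge 1/4$ both tournament contestants are non-extreme and the winner is crossed with an extreme chosen with probability $\ge 1/(2\mu)$), this yields per-generation success probability $\Omega((n+1-i)/n)$ when $i$ front points are archived, and hence the harmonic sum $O(n\log n)$ generations for Phase~2. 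Without crossover (or some substitute argument showing intermediate parents survive truncation, which the crowding-distance rule does not give you for constant $\mu$), your Phase~2 bound does not go through.
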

\begin{proof}
    We divide the running process of NSGA-II into two phases. The first phase starts after initialization and finishes until $1^n$ and $0^n$ are both found; the second phase starts after the first phase and finishes when the Pareto front is found. 
	
    For the first phase, we will prove that the expected number of generations for finding $0^n$ is $O(n\log n)$, and then the same bound holds for finding $1^n$ analogously. We first show that the maximal $f_1$ value of the solutions in the population $P$, i.e., $\max_{\bmx\in P} |\bmx|_0$, will not decrease, where $|\bmx|_0$ denotes the number of 0-bits in $\bmx$. Let $C$ denote the set of solutions in $P\cup P'$ with the maximal $f_1$ value, where $P'$ denotes the offspring population. Because of $P \subseteq P \cup P'$ and the definition of $C$, we have for any $\bm{x} \in C$, $|\bmx|_0\ge \max_{\bmx\in P} |\bmx|_0$. Thus, we only need to show that one solution in $C$ will be maintained in the next population. Because all the solutions in $C$ have the maximal $f_1$ value and the same $f_2$ value, they cannot be dominated by any solution in $P\cup P'$, implying that they all have rank $1$, i.e., belong to $R_1$ in the non-dominated sorting procedure. If $|R_1|\le \mu$, all the solutions in $C$ will be maintained in the next population. If $|R_1|> \mu$, the crowding distance of the solutions in $R_1$ needs to be computed. When the solutions in $R_1$ are sorted according to $f_1$ in ascending order, one solution in $C$ (denoted as $\bmx^*$) must be put in the last position and thus has infinite crowding distance. Note that only solutions in the first and the last positions can have infinite crowding distance. As \omm\ has two objectives, at most four solutions in $R_1$ can have infinite crowding distance. Thus, $\bmx^*$ is among the best four solutions in $R_1$  and must be included in the next population (note that $\mu\ge 4$), implying that the maximal $f_1$ value will not decrease.
	
    Next, we analyze the increase of the maximal $f_1$ value. Assume that currently the maximal $f_1$ value is $i$ ($i<n$), i.e., $\max_{\bmx\in P} |\bmx|_0=i$.	When using binary tournament selection to select a parent solution, the competition between the two randomly selected solutions is based on rank and crowding distance (note that the rank and crowding distance here are computed based on the current population $P$ instead of the union of the current population and the offspring population) with ties broken uniformly. As analyzed in the last paragraph, a solution $\bmx\in P$ with $|\bmx|_0=i$ must have rank $1$ and infinite crowding distance. Once $\bmx$ is selected for competition (whose probability is $1/\mu$), it will always win, if the other solution selected for competition has larger rank or finite crowding distance; or win with probability $1/2$, if the other solution has the same rank and crowding distance as $\bmx$, resulting in a tie which is broken uniformly at random. Thus, $\bmx$ can be selected as a parent solution with probability at least $1/(2\mu)$.
    In the reproduction procedure, a solution with more 0-bits can be generated from $\bmx$ if crossover is not performed (whose probability is $1-0.9=0.1$) and only one of the 1-bits in $\bmx$ is flipped by bit-wise mutation (whose probability is $((n-|\bmx|_0)/n)\cdot (1-1/n)^{n-1}$). Thus, the probability of generating a solution with more than $i$ 0-bits is at least 
	\begin{equation}\label{eq:selection}
	 \frac{1}{2\mu}\cdot 0.1\cdot \frac{n-|\bmx|_0}{n}\cdot \Big(1-\frac{1}{n}\Big)^{n-1}\ge \frac{n-i}{20e\mu n}.
 \end{equation}
	Because in each generation, $\mu/2$ pairs of parent solutions will be selected for reproduction,  the probability of generating a solution with more than $i$ 0-bits is at least 
	\begin{equation}\label{eq:nsga-omm-phase1-2}
		\begin{aligned}
			&1-\Big(1-\frac{n-i}{20e\mu n}\Big)^{\mu/2}
			\ge  1-\frac{1}{e^{(n-i)/(40en)}}\\
			&\ge  1-\frac{1}{1+(n-i)/(40en)}= \Omega\Big(\frac{n-i}{n}\Big),
		\end{aligned}
	\end{equation}
	where the inequalities hold by $1+a\le e^a$ for any $a\in \mathbb{R}$. 	
	Because the solution with the most number of 0-bits will be maintained in the population, the expected number of generations for increasing the maximal $f_1$ value to $n$, i.e., finding $0^n$, is at most 
	$\sum_{i=0}^{n-1}O(n/(n-i))=O(n \log n)$.
	That is, the expected number of generations of the first phase is $O(n\log n)$. 
	
	Now we consider the second phase, and will show that NSGA-II can find the whole Pareto front in $O(n\log n)$ expected numbers of generations. Note that after phase~1, $0^n$ and $1^n$ must be maintained in the population $P$. Let $D=\{j \mid \exists \bmx\in Q, |\bmx|_0=j\}$, where $Q$ denotes the archive, and we suppose $|D|=i$, i.e., $i$ points in the Pareto front has been found in the archive. Note that $i \geq 2$ as $0^n$ and $1^n$ have been found.  Next, we consider two cases.\\
	(1) The number of $1^n$ or the number of $0^n$ in the current population $P$ is at least $\mu/4$.  Without loss of generality, we assume that the number of $0^n$ is  at least $\mu/4$. Then,  the probability of selecting $0^n$ as a parent solution is at least $(1/4)^2=1/16$,  because it is sufficient to select $0^n$ twice in binary tournament selection. According to the analysis in the  paragraph above Eq.~\eqref{eq:selection},  the probability of selecting $1^n$ as the other parent solution is at least $1/(2\mu)$. After exchanging the first $k$ ($k\in [0..n]\setminus D$) bits of $0^n$ and $1^n$ by one-point crossover (the probability is $0.9 \cdot (1/n)$), a solution with $k$ 0-bits can be generated, which can keep unchanged after bit-wise mutation if none of the bits is flipped (the probability is $(1-1/n)^{n}$). Thus, the probability of generating a new point in the Pareto front is at least 
    \begin{equation}\label{eq:nsga-omm-phase2-1}
	\begin{aligned}
		&\frac{1}{16}\cdot \frac{1}{2\mu}\cdot 0.9\cdot \frac{n+1-i}{n} \cdot \Big(1-\frac{1}{n}\Big)^n\\
		&\ge \frac{n+1-i}{32\mu n}\cdot 0.9\cdot \Big(1-\frac{1}{n}\Big)\cdot \frac{1}{e}\ge \frac{n+1-i}{64e\mu n},
	\end{aligned}
    \end{equation}
    where the term $n+1-i$ is because there are $|[0..n]\setminus D|=n+1-i$ points in the Pareto front to be found, the first inequality holds by $(1-1/n)^{n-1}\ge 1/e$, and the second inequality holds for $n\ge 3$. \\
    (2) The number of $1^n$ and the number of $0^n$ in the current population $P$ are both less than $\mu/4$. Then, in one binary tournament selection procedure, the probability of selecting two solutions with the number of 0-bits in $[1..n-1]$ is at least $(\mu-\mu/4-\mu/4)^2/\mu^2=1/4$, and we assume that the winning solution $\bmx$ has $j$ ($j\in [1..n-1]$) 0-bits. 
    If the other selected parent solution is $0^n$, then for any $k\in [j+1..n]\setminus D$,  there must exist a crossover point $k'$ such that exchanging the first $k'$ bits of $\bmx$ and $0^n$ can generate a  solution with $k$ 0-bits. 
    If the other selected parent solution is $1^n$, then for any $k\in [0..j-1]\setminus D$,  there must exist a crossover point $k'$ such that exchanging the first $k'$ bits of $\bmx$ and $1^n$ can generate a solution with $k$ 0-bits. The newly generated solution can keep unchanged  by flipping no bits in bit-wise mutation. Note that the probability of selecting $1^n$ (or $0^n$) as a parent solution is at least $1/(2\mu)$. Thus, similar to Eq.~\eqref{eq:nsga-omm-phase2-1},
    the probability of generating  a new point in the Pareto front is at least
    \begin{equation}\label{eq:nsga-omm-phase2-2}
    	\begin{aligned}
    		\frac{1}{4}\cdot \frac{1}{2\mu}\cdot 0.9\cdot \frac{n+1-|D|}{n}\cdot \Big(1-\frac{1}{n}\Big)^n
    		\ge \frac{n+1-i}{16e\mu n}.
    	\end{aligned}
    \end{equation}
    By taking the smaller one between Eqs.~\eqref{eq:nsga-omm-phase2-1} and~\eqref{eq:nsga-omm-phase2-2}, and using the fact that NSGA-II produces $\mu/2$  pairs of offspring solutions in each generation, the probability of generating a new point in the Pareto front in each generation is at least 
	\begin{equation}\label{eq:nsga-omm-phase2-3}
		\begin{aligned}
			1-\Big(1-\frac{n+1-i}{64e\mu n}\Big)^{\mu/2}=
			\Omega\Big(\frac{n+1-i}{n}\Big).
		\end{aligned}
	\end{equation}
	Then, we can derive that the expected number of generations of the second phase (i.e., for finding the whole Pareto front) is at most 
	$\sum_{i=2}^{n}O(n/(n+1-i))=O(n \log n)$.
	
	Combining the two phases, the total expected number of generations is  $O(n\log n)$, implying that the expected number of fitness evaluations is $O(\mu  n\log n)$, because each generation of NSGA-II  requires to evaluate $\mu$ offspring solutions. Thus, the theorem holds.	
\end{proof}

We prove in Theorem~\ref{thm:nsga-arc-lotz} that the expected number of fitness evaluations of NSGA-II using an archive for solving \lotz\ is $O(\mu n^2+\mu^2 n\log n)$, where the population size $\mu \ge 4$.
As the proof of Theorem~\ref{thm:nsga-arc-omm}, we divide the optimization procedure into two phases, that is to find the extreme solutions $1^n$ and $0^n$, and to find the whole Pareto front, respectively.  
But due to the fact that only solutions with the form $1^j0^{n-j}$ are Pareto optimal for \lotz\ while any solution is Pareto optimal for \omm, their analyses of increasing the maximal $f_1$ value in the first phase as well as generating new Pareto optimal solutions in the second phase are different. 
In the following proof, we use $\lo(\bmx)=\sum_{i=1}^n\prod_{j=1}^i x_j$ and $\tz(\bmx)=\sum_{i=1}^n\prod_{j=i}^n(1-x_j)$ to denote the number of leading 1-bits and trailing 0-bits of a solution $\bmx$, respectively.
\begin{theorem}\label{thm:nsga-arc-lotz}
	For NSGA-II solving \lotz, if using an archive, and a population size $\mu$ such that $\mu\ge 4$, then the expected number of fitness evaluations for finding the Pareto front is $O(\mu n^2+\mu^2 n\log n)$.
\end{theorem}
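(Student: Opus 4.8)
The plan is to follow the two-phase scheme of the proof of Theorem~\ref{thm:nsga-arc-omm}: Phase~1 runs from after initialization until both extreme Pareto optimal solutions $1^n$ and $0^n$ lie in the population $P$, and Phase~2 runs from then until the archive $Q$ contains every objective vector of the Pareto front. Throughout, I would track $\max_{\bmx\in P}\lo(\bmx)$ (and, symmetrically, $\max_{\bmx\in P}\tz(\bmx)$) and first argue it never decreases. Let $C$ be the set of solutions of $P\cup P'$ attaining the maximal $\lo$-value and, among those, the maximal $\tz$-value; a solution with smaller $\lo$ cannot dominate a member of $C$, and one with equal $\lo$ has no larger $\tz$, so every solution in $C$ is non-dominated in $P\cup P'$, hence has rank~$1$. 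If $|R_1|<\mu$ then $C\subseteq R_1$ is taken wholesale into the next population; otherwise the population is filled from $R_1$ sorted by $f_1$, and a member of $C$ occupies the extreme position with infinite crowding distance, so (at most four solutions have infinite crowding distance, and $\mu\ge 4$) it survives. The same argument for $\tz$ shows $\max_{\bmx\in P}\tz(\bmx)$ never decreases; in particular, once $1^n$ (resp.\ $0^n$) is found it never leaves $P$, being the unique solution with $\lo=n$ (resp.\ $\tz=n$).

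For Phase~1, suppose $\max_{\bmx\in P}\lo(\bmx)=i<n$ and let $\bmx\in P$ have $\lo(\bmx)=i$ and, among such solutions, maximal $\tz$. Then $\bmx$ has rank~$1$ and, being the $f_1$-maximiser within $P$, has infinite crowding distance there, so binary tournament selection picks it as a parent with probability at least $1/(2\mu)$. If crossover is skipped (probability $0.1$) and bit-wise mutation flips bit $i+1$ of $\bmx$ (a $0$-bit, since $\lo(\bmx)=i$) while leaving bits $1,\dots,i$ unchanged, the offspring has $\lo\ge i+1$; this occurs with probability at least $(1/n)(1-1/n)^{n-1}\ge 1/(en)$. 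Thus a single reproduction trial increases $\max_{\bmx\in P}\lo(\bmx)$ with probability at least $1/(20e\mu n)$, and a whole generation does so with probability $1-(1-1/(20e\mu n))^{\mu/2}=\Omega(1/n)$. Unlike \omm, where any of the $n-i$ one-bits would work, here one specific bit must flip, so each unit increase costs $O(n)$ expected generations and reaching $\lo=n$ costs $\sum_{i=0}^{n-1}O(n)=O(n^2)$; the symmetric argument for $\tz$ adds another $O(n^2)$. Hence Phase~1 takes $O(n^2)$ expected generations, i.e.\ $O(\mu n^2)$ fitness evaluations.

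For Phase~2, both $1^n$ and $0^n$ are in $P$. The \omm\ case analysis with \emph{middle} population members does not transfer, since for \lotz\ a non-extreme population member need not be Pareto optimal; instead I would create every missing Pareto vector by recombining the two extremes. In one reproduction step the first parent equals $1^n$ with probability at least $1/(2\mu)$ and the second equals $0^n$ with probability at least $1/(2\mu)$ (each has rank~$1$ with infinite crowding distance, being the unique $f_1$- resp.\ $f_2$-maximiser), so with probability at least $1/(2\mu^2)$ the parent pair is $\{1^n,0^n\}$, and one-point crossover at point $k\in[1..n-1]$ then yields a child $1^k0^{n-k}$. Writing $D=\{j\mid\exists\bmx\in Q,\ \lo(\bmx)=j\}$ with $|D|=i\ge 2$, the $n+1-i$ still-missing Pareto vectors are all of the form $1^k0^{n-k}$ with $k\in[1..n-1]$, each produced by a unique crossover point, so the crossover hits a missing vector with probability at least $0.9\cdot(n+1-i)/n$ and the result survives mutation with probability $(1-1/n)^n\ge 1/(2e)$. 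Hence one trial produces a new Pareto vector with probability $\Omega((n+1-i)/(\mu^2 n))$, a generation does so with probability $1-(1-\Omega((n+1-i)/(\mu^2 n)))^{\mu/2}=\Omega((n+1-i)/(\mu n))$, each new point costs $O(\mu n/(n+1-i))$ expected generations, and completing the front costs $\sum_{i=2}^{n}O(\mu n/(n+1-i))=O(\mu n\log n)$ expected generations, i.e.\ $O(\mu^2 n\log n)$ fitness evaluations. Summing the two phases yields $O(\mu n^2+\mu^2 n\log n)$.

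I expect the main obstacle to be exactly this Phase~2 argument: because \lotz\ lacks the property of \omm\ that every solution is Pareto optimal, there is no cheap way to exploit intermediate population members, and one must assemble each Pareto point $1^k0^{n-k}$ by recombining the two specific extremes, an event of probability only $\Theta(1/\mu^2)$ per step, which is precisely what forces the extra $\mu^2 n\log n$ term (just as the specific-bit flip in Phase~1 forces the $\mu n^2$ term rather than \omm's $\mu n\log n$). A secondary subtlety is getting the preservation/selection argument for the $\lo$- and $\tz$-record solutions right, which hinges on taking, among solutions of maximal $\lo$, one of maximal $\tz$, so that it is genuinely non-dominated rather than merely $f_1$-maximal.
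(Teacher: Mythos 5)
Your proposal is correct and follows essentially the same route as the paper's proof: the same two-phase scheme, the same preservation argument via the set of maximal-$\lo$ solutions refined by maximal $\tz$ (the paper's $C^*$), the same $1/(2\mu)$ tournament-selection bound with a specific-bit mutation giving $O(n^2)$ generations for Phase~1, and the same crossover-of-the-two-extremes argument giving $O(\mu n\log n)$ generations for Phase~2, hence $O(\mu n^2+\mu^2 n\log n)$ evaluations. Only inessential constants differ.
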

\begin{proof}
	The proof of the first phase is similar to that of Theorem~\ref{thm:nsga-arc-omm}. However, to show that the maximal $f_1$ value, i.e., $\max_{\bmx\in P} \lo(\bmx)$, will not decrease, we need to define $C$ as the set of solutions in $P\cup P'$ with the maximal number of  leading 1-bits, and further define a set $C^*=\{\bmx\in C\mid  \tz(\bmx)=\max_{\bmx\in C}\tz(\bmx)\}$, which denotes the set of solutions in $C$ with the maximal number of trailing 0-bits. As the analysis in the proof of Theorem~\ref{thm:nsga-arc-omm}, one solution in $C^*$ will be maintained in the next population, implying that the maximal $f_1$ value will not decrease.
	To make the maximal $f_1$ value increase, it is sufficient to select a solution $\bmx\in C^*$ as the parent solution and flip its $(\lo(\bmx)+1)$-th bit (which must be 0). Similar to the analysis of Eq.~\eqref{eq:selection}, the probability of  generating a solution with more leading 1-bits is at least 
	\begin{equation}\label{eq:nsga-lotz-phase1-1}
		\frac{1}{2\mu}\cdot 0.1\cdot \frac{1}{n}\cdot \Big(1-\frac{1}{n}\Big)^{n-1}\ge \frac{1}{20e\mu n}.
	\end{equation}
	Following the analysis of Eq.~\eqref{eq:nsga-omm-phase1-2}, the probability of generating a solution with more leading 1-bits in each generation is at least
		$	1-(1-1/(20e\mu n))^{\mu/2}= \Omega(1/n) $.
	As the maximal $f_1$ value can increase at most $n$ times, the expected number of generations of the first phase is $O(n^2)$. 
	
    Next we consider the second phase.	Let $D=\{j \in [0 .. n]\mid 1^j0^{n-j}\in Q\}$, where $Q$ denotes the archive, and suppose $|D|=i$, i.e., $i$ points in the Pareto front has been found in the archive. By selecting $1^n$ and $0^n$ as a pair of parent solutions, and exchanging their first $k$ bits ($k\in [0..n]\setminus D$) by one-point crossover, the solution $1^k0^{n-k}$ can be generated. $1^k0^{n-k}$ can keep unchanged by flipping no bits in bit-wise mutation.
	 As the analysis in the paragraph above Eq.~\eqref{eq:selection}, the probability of selecting $1^n$ (or $0^n$) as the parent solution by binary tournament selection is at least $1/(2\mu)$.
    Thus, the probability of generating a new Pareto optimal solution is at least 
	\begin{equation}\label{eq:nsga-lotz-phase2-1}
		\begin{aligned}
			\frac{1}{(2\mu)^2}\cdot 0.9\cdot \frac{n+1-i}{n} \cdot \Big(1-\frac{1}{n}\Big)^n \ge \frac{n+1-i}{8e\mu^2n},
		\end{aligned}
	\end{equation}
	where the inequality holds by large enough $n$.
	Following the analysis of Eq.~\eqref{eq:nsga-omm-phase2-3}, the probability of generating a new point in the Pareto front in each generation is at least
	\begin{equation}
			1-\Big(1-\frac{n+1-i}{8e\mu^2n}\Big)^{\mu/2}=\Omega\Big(\frac{n+1-i}{\mu n}\Big).
	\end{equation}
	Thus, the expected number of generations for finding the whole Pareto front is at most $\sum_{i=2}^{n}O(\mu n/(n+1-i))=O(\mu n \log n)$. 
	
	Combining the two phases, the total expected number of generations is  $O(n^2+\mu n\log n)$, implying that the expected number of fitness evaluations is $O(\mu n^2+\mu^2 n\log n)$. Thus, the theorem holds.	
\end{proof}

The expected number of fitness evaluations of the original NSGA-II (without using an archive) for solving \omm\ and \lotz\ has been shown to be $O(\mu n\log n)$ and $O(\mu n^2)$, respectively, where the population size $\mu\ge 2(n+1)$~\cite{bian2022better}. Thus, our results in Theorems~\ref{thm:nsga-arc-omm} and~\ref{thm:nsga-arc-lotz} show that if a constant population size is used for NSGA-II having an archive, the expected running time can be reduced by a factor of $\Theta(n)$. 
The main reason for the acceleration is that the archive can preserve all the non-dominated solutions generated so far, which enables NSGA-II to discard solutions that are not critical for finding the Pareto front, and thus to use a small population bringing more efficient exploration of the search space.

\section{Analysis of SMS-EMOA with an Archive}

In this section, we consider SMS-EMOA in Algorithm~\ref{alg:sms-emoa} using an archive.
We prove in Theorems~\ref{thm:sms-arc-omm} and~\ref{thm:sms-arc-lotz} that the expected number of fitness evaluations of SMS-EMOA using an archive for solving \omm\ and \lotz\ is $O(\mu n\log n)$ and $O(\mu n^2+\mu^2n\log n)$, respectively, where the population size $\mu \ge 2$.
Their proofs are similar to that of Theorems~\ref{thm:nsga-arc-omm} and~\ref{thm:nsga-arc-lotz}, respectively. That is, we divide the optimization procedure into two phases, where the first phase aims at finding  $1^n$ and $0^n$, and the second phase aims at finding the whole Pareto front. The main difference of the proofs is led by that 1) during the population update procedure, SMS-EMOA directly preserves two boundary objective vectors which contain the largest objective value for $f_1$ or $f_2$, and 2) during the reproduction procedure, it uses uniform parent selection and generates only one offspring solution in each generation. 
\begin{theorem}\label{thm:sms-arc-omm}
	For SMS-EMOA solving \omm, if using an archive, and a population size $\mu$ such that $\mu\ge 2$, then the expected number of fitness evaluations for finding the Pareto front is $O(\mu n \log n)$.
\end{theorem}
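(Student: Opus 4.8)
The plan is to reuse the two-phase structure of the proof of Theorem~\ref{thm:nsga-arc-omm}. Phase~1 runs from initialization until both extreme points $0^n$ and $1^n$ lie in the population $P$; Phase~2 runs until the archive $Q$ contains a representative of every Pareto optimal objective vector. On \omm\ all solutions are non-dominated, so $Q$ simply accumulates one solution for each zero-bit count that has appeared, and the Pareto front is found exactly when all $n+1$ counts have appeared. Since SMS-EMOA evaluates exactly one offspring per generation, it suffices to bound the expected number of generations of each phase by $O(\mu n\log n)$.

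For Phase~1, I would first show that $\max_{\bmx\in P}|\bmx|_0$ never decreases: on \omm\ every solution has rank $1$, so survivor selection removes a single non-extreme member of $P\cup\{\bmx''\}$ with least hypervolume contribution, while the solution attaining the largest $f_1$-value is retained by SMS-EMOA's boundary-preservation rule (and symmetrically for $f_2$). To raise $\max_{\bmx\in P}|\bmx|_0$ from some $i<n$, it suffices to pick a current maximizer as the sole parent (probability $\ge1/\mu$ under uniform selection), skip crossover (probability $1-p_c$), and flip exactly one of its $n-i$ one-bits (probability $\tfrac{n-i}{n}(1-\tfrac1n)^{n-1}\ge\tfrac{n-i}{en}$), giving a per-generation success probability $\Omega\!\big(\tfrac{n-i}{\mu n}\big)$. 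Summing waiting times over $i\in[0..n-1]$ yields $O(\mu n\log n)$ expected generations to obtain $0^n$; the symmetric argument gives $1^n$, so Phase~1 costs $O(\mu n\log n)$ generations.

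For Phase~2, observe that once Phase~1 is over we have $\max_{\bmx\in P}|\bmx|_0=n$ and $\min_{\bmx\in P}|\bmx|_0=0$ from then on, so $0^n,1^n\in P$ permanently by boundary preservation. Let $D=\{j\in[0..n]:Q\text{ contains a solution with }j\text{ zero-bits}\}$ with $|D|=i\ge2$; the goal is $|D|=n+1$. The key combinatorial fact I would prove is: for \emph{every} $\bmy\in P$, the offspring of one-point crossover with first parent $0^n$, second parent $\bmy$ and crossover point $c$ is $0^c\bmy_{c+1}\cdots\bmy_n$, whose zero-bit count equals $c$ plus the number of zeros among $\bmy_{c+1},\dots,\bmy_n$ --- a nondecreasing step function of $c$ with unit steps, hence hitting \emph{every} integer in $[a_\bmy,n]$ as $c$ ranges over $[1..n]$, where $a_\bmy=|\bmy|_0+[\bmy_1=1]$; symmetrically, first parent $1^n$ reaches every integer in $[0,b_\bmy]$ with $b_\bmy=|\bmy|_0-[\bmy_1=0]$; and since $a_\bmy=b_\bmy+1$ these two intervals \emph{partition} $[0..n]$. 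Consequently, conditioning on selecting $\bmx\in\{0^n,1^n\}$ as the first parent (probability $\ge1/\mu$ for each, disjointly), performing crossover (probability $p_c$), selecting an arbitrary second parent $\bmy$, and then flipping no bit in mutation (probability $(1-1/n)^n=\Omega(1)$), the chance of generating a solution with a zero-bit count not in $D$ is at least (summing over the two choices of first parent)
\[
\frac{p_c}{\mu}\cdot\Omega(1)\cdot\frac1\mu\sum_{\bmy\in P}\frac{\big|([0,b_\bmy]\cup[a_\bmy,n])\setminus D\big|}{n}=\Omega\!\Big(\frac{n+1-i}{\mu n}\Big),
\]
since $([0,b_\bmy]\sqcup[a_\bmy,n])\setminus D=[0..n]\setminus D$ has size $n+1-i$ for every $\bmy$. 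Summing waiting times over $i=2,\dots,n$ gives $\sum_{i=2}^nO\!\big(\tfrac{\mu n}{n+1-i}\big)=O(\mu n\log n)$ generations for Phase~2, and adding the two phases (one evaluation per generation) yields the claimed $O(\mu n\log n)$.

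The hard part is Phase~2 --- specifically, obtaining a per-generation improvement probability $\Omega\!\big(\tfrac{n+1-i}{\mu n}\big)$ rather than $\Omega\!\big(\tfrac{n+1-i}{\mu^2 n}\big)$. The naive idea of using $0^n$ and $1^n$ as \emph{both} crossover parents costs an extra factor $1/\mu$ and only proves $O(\mu^2 n\log n)$; this was affordable for NSGA-II, which generates $\mu/2$ offspring pairs per generation and can afford a case distinction on the multiplicities of the extreme solutions, but not for SMS-EMOA with its single offspring. The interval-partition lemma resolves this by letting an arbitrary (hence essentially free) second parent be combined with one deliberately chosen extreme first parent to realize any target objective vector at a suitable crossover point; the only delicate point is verifying the endpoints of the two intervals (the cases $\bmy_1=0$ versus $\bmy_1=1$, and crossover points near $1$ and $n$) so that they indeed tile $[0..n]$.
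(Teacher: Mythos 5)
Your proof is correct, and its overall skeleton (two phases, one evaluation per generation, waiting-time sums) plus the whole Phase-1 argument coincide with the paper's: boundary preservation keeps the maximal $f_1$ (resp.\ $f_2$) value from decreasing, uniform parent selection contributes $1/\mu$, and a no-crossover, single-bit-flip step gives $\Omega\big(\tfrac{n-i}{\mu n}\big)$ per generation, hence $O(\mu n\log n)$ generations to reach $0^n$ and $1^n$. Where you genuinely diverge is Phase 2. The paper transplants the NSGA-II argument and keeps its case distinction on how many copies of $0^n$/$1^n$ the population holds: if an extreme occupies a constant fraction of $P$, it is the ``cheap'' parent and the other extreme is selected with probability $1/\mu$; otherwise a non-extreme solution is the cheap parent and is crossed with an extreme (cost $1/\mu$), using the fact that such a crossover reaches a range of zero-bit counts. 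Your interval-partition lemma --- with first parent $0^n$ the reachable counts are exactly $[a_{\bmy}..n]$ with $a_{\bmy}=|\bmy|_0+[\bmy_1=1]$, with first parent $1^n$ they are $[0..b_{\bmy}]$ with $b_{\bmy}=|\bmy|_0-[\bmy_1=0]$, and $a_{\bmy}=b_{\bmy}+1$ so the two intervals tile $[0..n]$ --- eliminates the case distinction entirely: you always pay $1/\mu$ for an extreme \emph{first} parent (well-defined since SMS-EMOA's offspring takes the prefix from the first parent) and take the second parent for free, and the tiling guarantees every missing front point is reachable at some crossover point no matter which $\bmy$ is drawn. Both routes yield the same $\Omega\big(\tfrac{n+1-i}{\mu n}\big)$ per-generation bound and the same $O(\mu n\log n)$ total; half of your lemma appears implicitly in the paper's second case, but your version is arguably cleaner for SMS-EMOA since no bookkeeping of extreme-solution multiplicities is needed, and your observation that pairing the two extremes directly would only give $O(\mu^2 n\log n)$ correctly identifies why the extra idea is necessary here (and why the paper resorts to exactly that pairing for \lotz).
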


\begin{proof}
	For the first phase, the maximal $f_1$ value will not decrease because SMS-EMOA directly keeps the two boundary points in the population update procedure.	Now, we consider the increase of the maximal $f_1$ value. Note that SMS-EMOA in Algorithm~\ref{alg:sms-emoa} selects a parent solution uniformly at random, instead of using binary tournament selection; thus, the probability of selecting any specific solution in the population is $1/\mu$. Eq.~\eqref{eq:selection} changes to 
		\begin{equation}\label{eq:sms-omm-phase1-1}
		\frac{1}{\mu}\cdot 0.1\cdot \frac{n-|\bmx|_0}{n}\cdot \Big(1-\frac{1}{n}\Big)^{n-1}\ge \frac{n-i}{10e\mu n}.
		\end{equation}
		Different from NSGA-II which produces $\mu/2$ pairs of offspring solutions in each generation,  SMS-EMOA only reproduces one solution in each generation. Thus, the expected number of generations for increasing the maximal $f_1$ value to $n$, i.e., finding $0^n$, is at most 
		$	\sum_{i=0}^{n-1}10e\mu n/(n-i)=O(\mu n \log n)$.
		That is, the expected number of generations of the first phase is $O(\mu n\log n)$. 
		
		For the second phase, by considering the difference between uniform parent selection employed by SMS-EMOA and binary tournament selection employed by NSGA-II, Eq.~\eqref{eq:nsga-omm-phase2-1} changes to 
		\[
		\begin{aligned}
			\frac{1}{4}\cdot \frac{1}{\mu}\cdot 0.9\cdot \frac{n+1-i}{n} \cdot \Big(1-\frac{1}{n}\Big)^n 
			\ge \frac{n+1-i}{8e\mu n},
		\end{aligned}
		\]
		and Eq.~\eqref{eq:nsga-omm-phase2-2} changes to
		\[
			\begin{aligned}
				\frac{1}{2}\cdot \frac{1}{\mu}\cdot 0.9\cdot \frac{n+1-|D|}{n}\cdot \Big(1-\frac{1}{n}\Big)^n
				\ge \frac{n+1-i}{4e\mu n}.
			\end{aligned}
		\]
		Thus, the expected number of generations of the second phase (i.e., for finding the whole Pareto front) is at most 
		$\sum_{i=2}^{n}8e\mu n/(n+1-i)=O(\mu n \log n)$.
		
		Combining the two phases, the total expected number of generations is  $O(\mu n\log n)$. Since SMS-EMOA only generates one solution in generation, the expected number of fitness evaluations is also $O(\mu  n\log n)$.
\end{proof}

\begin{theorem}\label{thm:sms-arc-lotz}
	For SMS-EMOA solving \lotz, if using an archive, and a population size $\mu$ such that $\mu\ge 2$, then the expected number of fitness evaluations for finding the Pareto front is $O(\mu n^2+\mu^2 n\log n)$.
\end{theorem}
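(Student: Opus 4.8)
The plan is to reuse, essentially verbatim, the two‑phase decomposition behind Theorems~\ref{thm:nsga-arc-lotz} and~\ref{thm:sms-arc-omm}. The first phase begins after initialization and ends once both extreme Pareto optimal solutions $1^n$ and $0^n$ have been generated; the second phase ends once every point of the Pareto front has been stored in the archive $Q$. Two features of SMS-EMOA must be carried through the whole argument: parent selection is uniform, so a prescribed solution of $P$ is chosen as a parent with probability $1/\mu$ (rather than $1/(2\mu)$ under binary tournament selection), and each generation produces a single offspring, so the expected number of fitness evaluations equals the expected number of generations plus the $\mu$ evaluations of initialization.

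For the first phase I would first observe that $\max_{\bmx\in P}\lo(\bmx)$ never decreases, because the solution of $P$ attaining this value is a boundary point for $f_1$ and SMS-EMOA always keeps the two boundary points in its truncation step --- the same reasoning as in the proof of Theorem~\ref{thm:sms-arc-omm}, and symmetrically for $\max_{\bmx\in P}\tz(\bmx)$. To raise $\max_{\bmx\in P}\lo(\bmx)$ from some value $\ell<n$, it suffices to select such a boundary solution $\bmx$ as parent (probability $1/\mu$), skip crossover (probability $1-0.9=0.1$), and flip only the $(\ell+1)$-th bit of $\bmx$, which is $0$ (probability $(1/n)(1-1/n)^{n-1}\ge 1/(en)$); the per-generation success probability is thus at least $1/(10e\mu n)$, the SMS-EMOA analogue of Eq.~\eqref{eq:nsga-lotz-phase1-1}. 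Since $\lo$ increases at most $n$ times before $1^n$ appears, and likewise for $0^n$, the first phase takes $\sum_{i=0}^{n-1}O(\mu n)=O(\mu n^2)$ expected generations.

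For the second phase, once $1^n$ and $0^n$ have been found they stay in $P$ forever as the two boundary points. Writing $D=\{j\in[0..n]\mid 1^j0^{n-j}\in Q\}$ and $|D|=i\ge 2$, I would bound the probability of enlarging the archive in one generation from below by choosing $\bmx=1^n$ as the first parent and $\bmy=0^n$ as the second (probability $1/\mu^2$), performing one-point crossover (probability $0.9$) at a crossover point $k$, and flipping no bit in the ensuing mutation (probability $(1-1/n)^n$), which yields the Pareto optimal solution $1^k0^{n-k}$; letting $k$ range over the $n+1-i$ still-missing values gives a success probability of at least $\tfrac{0.9}{\mu^2}\cdot\tfrac{n+1-i}{n}\cdot(1-1/n)^n=\Omega\big(\tfrac{n+1-i}{\mu^2 n}\big)$, the analogue of Eq.~\eqref{eq:nsga-lotz-phase2-1}. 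Summing the waiting times yields $\sum_{i=2}^{n}O\big(\mu^2 n/(n+1-i)\big)=O(\mu^2 n\log n)$ expected generations for the second phase. Adding the two phases gives $O(\mu n^2+\mu^2 n\log n)$ generations, hence the claimed bound on fitness evaluations.

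Most of this is bookkeeping once the earlier theorems are available, so the hard part will be the same small checks that underlie those theorems rather than anything genuinely new: confirming that the solution realizing the current best $f_1$ (resp.\ $f_2$) value really is the extreme point SMS-EMOA is hard-wired to preserve --- in particular when the new offspring lands in the same non-dominated front and the hypervolume-based removal is triggered --- and confirming that the second phase needs only $1^n$ and $0^n$ to sit in the population, which is what makes $\mu\ge 2$ sufficient. I would make both of these explicit before running the probability estimates above.
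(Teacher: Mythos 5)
Your proposal matches the paper's proof essentially step for step: the same two-phase decomposition (find $1^n$ and $0^n$, then fill in the Pareto front via crossover of the two extremes), the same per-generation success probabilities $\Omega(1/(\mu n))$ for phase one and $\Omega((n+1-i)/(\mu^2 n))$ for phase two obtained by replacing the $1/(2\mu)$ tournament-selection factor with the $1/\mu$ uniform-selection factor, and the same summations giving $O(\mu n^2)$ and $O(\mu^2 n\log n)$ generations, which equal fitness evaluations since one offspring is produced per generation. The additional checks you flag (non-decrease of the maximal $\lo$/$\tz$ value via the preserved extreme points, and $\mu\ge 2$ sufficing because only $1^n$ and $0^n$ are needed in $P$) are exactly the points the paper also relies on, so nothing further is needed.
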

\begin{proof}
    By following the proof of Theorem~\ref{thm:nsga-arc-lotz} and considering the difference between the parent selection strategies employed by NSGA-II and SMS-EMOA, Eq.~\eqref{eq:nsga-lotz-phase1-1} changes to 
	\[
			\frac{1}{\mu}\cdot 0.1\cdot \frac{1}{n}\cdot \Big(1-\frac{1}{n}\Big)^{n-1}\ge \frac{1}{10e\mu n}.
	\]
	 and Eq.~\eqref{eq:nsga-lotz-phase2-1} changes to 
	\[
	\frac{1}{\mu^2}\cdot 0.9\cdot \frac{n+1-i}{n} \cdot \Big(1-\frac{1}{n}\Big)^n \ge \frac{n+1-i}{2e\mu^2n}.
	\]
	Note that by the uniform parent selection strategy of SMS-EMOA, any solution in the population can be selected with probability $1/\mu$ for reproduction, while by binary tournament selection in NSGA-II, a selected solution needs to compete with the other selected one.  According to the above two equations, the expected number of generations required by the two phases is $O(\mu n^2)$ and 
	$ 	\sum_{i=2}^{n}2e\mu^2n/(n+1-i)=O(\mu^2 n \log n) $, respectively.
	Combining the two phases, the total expected number of generations is  $O(\mu n^2+\mu^2 n\log n)$, implying that the expected number of fitness evaluations is also $O(\mu n^2+\mu^2 n\log n)$ since SMS-EMOA only generates one solution in each generation. 
\end{proof}

The expected number of fitness evaluations of the original SMS-EMOA (without crossover) solving \omm\ and \lotz\ has been shown to be $O(\mu n\log n)$ and $O(\mu n^2)$, respectively, where the population size $\mu\ge n+1$~\cite{zheng2024sms}. Though their analysis does not consider crossover, the running time bounds still hold asymptotically here as the crossover operator is not performed with a constant probability $0.1$ in Algorithm~\ref{alg:sms-emoa}. Note that the size of the Pareto front is $n+1$, thus when $\mu<n+1$, the whole Pareto front cannot be covered.
Therefore, our results in Theorems~\ref{thm:sms-arc-omm} and~\ref{thm:sms-arc-lotz} show that if a constant population size is used for SMS-EMOA having an archive, the expected running time can be accelerated by a factor of $\Theta(n)$.
The main reason for the acceleration is similar to that of NSGA-II. That is, introducing the archive enables SMS-EMOA to only preserve essential solutions for locating the Pareto front, and thus makes the exploration more efficient.

\section{Experiments}

In the previous sections, we have proved that when an archive is used in NSGA-II and SMS-EMOA, the expected number of fitness evaluations for solving \omm\ and \lotz\ can be reduced by a factor of $\Theta(n)$. 
However, as only upper bounds on the running time of the original NSGA-II and SMS-EMOA have been derived, we conduct experiments to examine their actual performance to complement the theoretical results.

\begin{figure}[t!]\centering
    \hspace{1.3em}
	\begin{minipage}[c]{0.85\linewidth}\centering
		\includegraphics[width=1\linewidth]{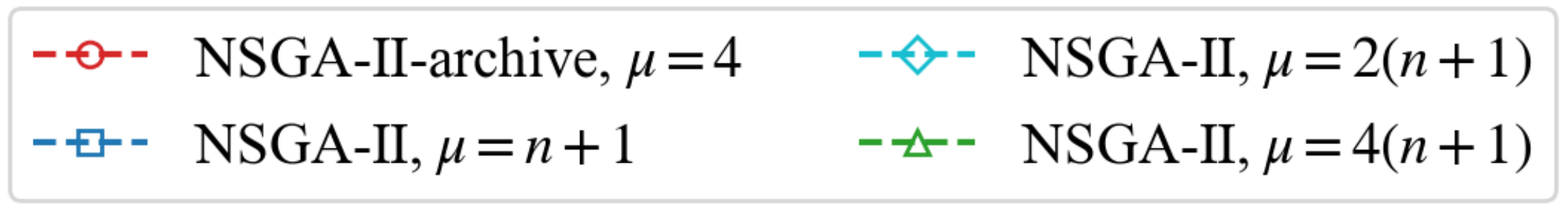}
	\end{minipage}
	\begin{minipage}[c]{0.49\linewidth}\centering
		\includegraphics[width=1\linewidth]{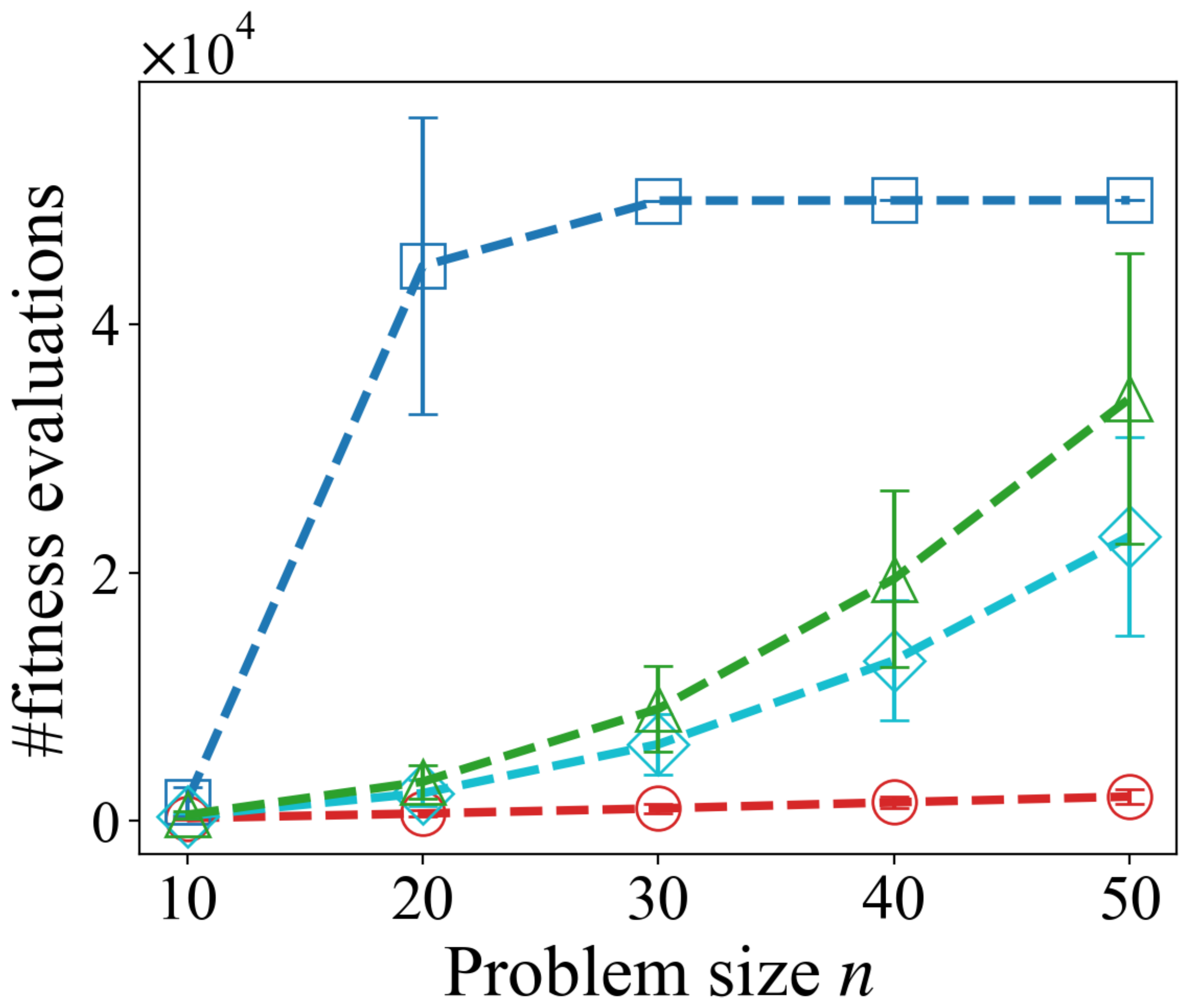}
	\end{minipage}
	\begin{minipage}[c]{0.49\linewidth}\centering
		\includegraphics[width=1\linewidth]{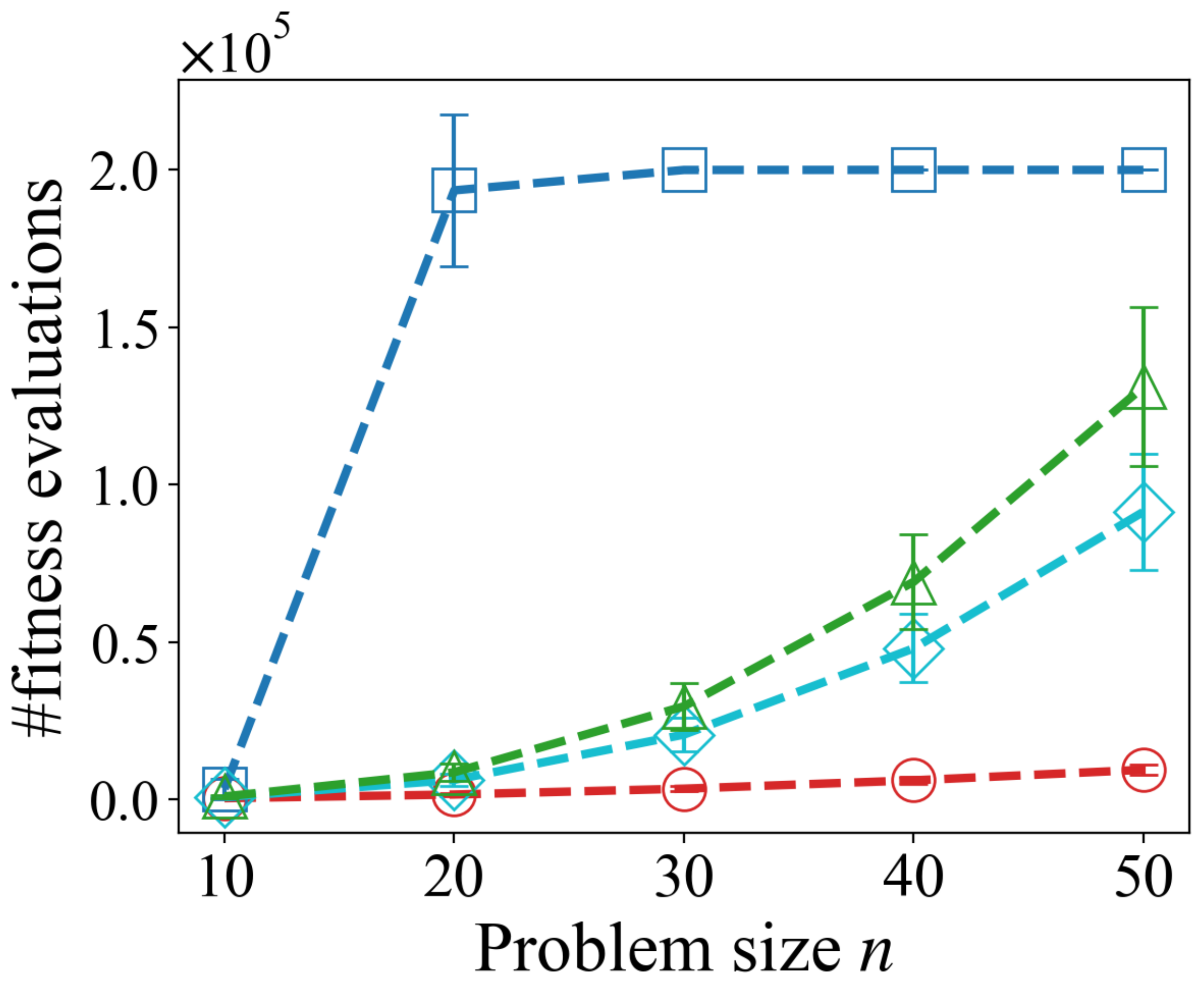}
	\end{minipage}\\\vspace{0.2em}
	\begin{minipage}[c]{1\linewidth}\centering
		\small(a) NSGA-II
	\end{minipage}\\\vspace{0.4em}
    \hspace{1.3em}
        \begin{minipage}[c]{0.88\linewidth}\centering
		\includegraphics[width=1\linewidth]{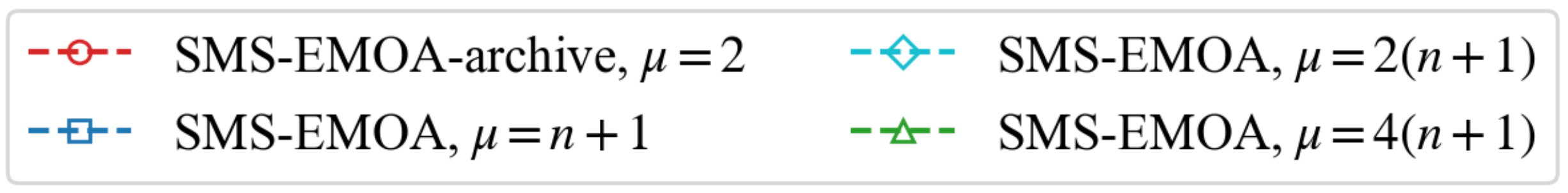}
	\end{minipage}
	\begin{minipage}[c]{0.49\linewidth}\centering
		\includegraphics[width=1\linewidth]{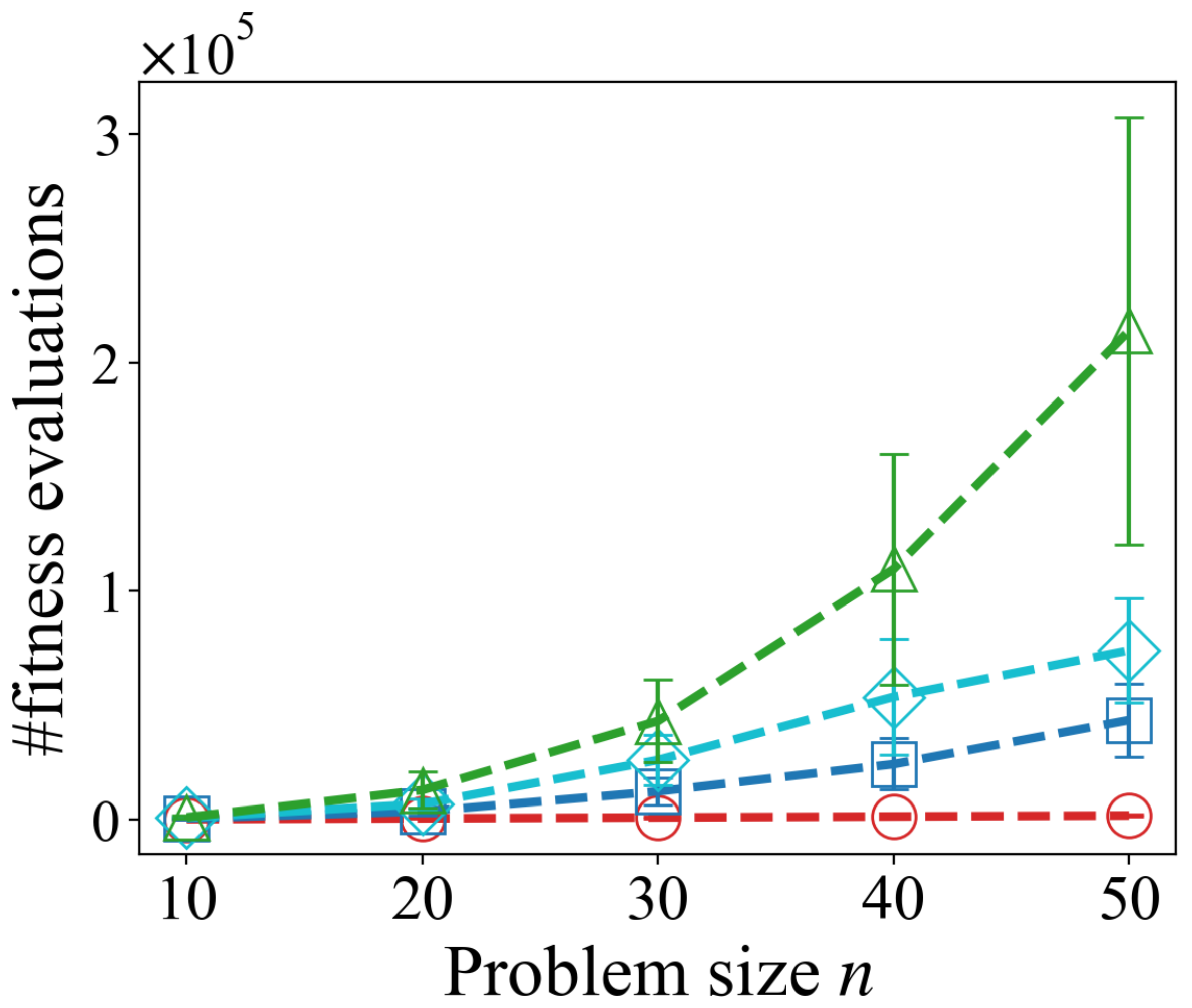}
	\end{minipage}
	\begin{minipage}[c]{0.49\linewidth}\centering
		\includegraphics[width=1\linewidth]{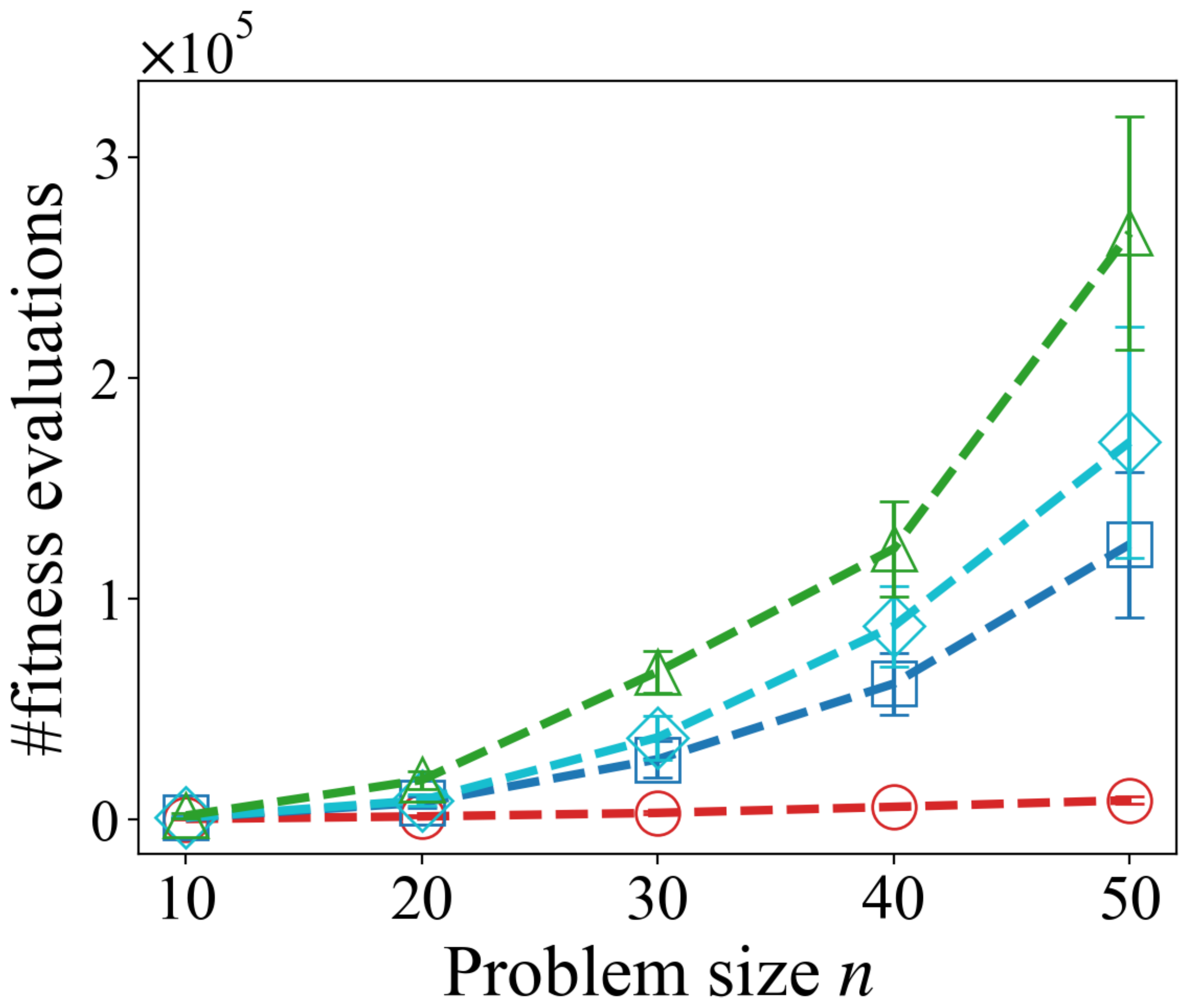}
	\end{minipage}\\\vspace{0.2em}
	\small(b) SMS-EMOA
	\caption{Average number of fitness evaluations of NSGA-II and SMS-EMOA with or without an archive for solving the \omm\ and \lotz\ problems. Left subfigure: \omm; right subfigure: \lotz.}\label{fig:experiment}
\end{figure}

Specifically, we set the problem size $n$ from $10$ to $50$, with a step of $10$. 
For NSGA-II and SMS-EMOA using an archive, the population size $\mu$ is set to $4$ and $2$, respectively, as suggested in Theorems~\ref{thm:nsga-arc-omm}--\ref{thm:sms-arc-lotz}. For the original NSGA-II and SMS-EMOA without an archive, we test three values of $\mu$, i.e., $n+1$, $2(n+1)$, and $4(n+1)$, as suggested by \cite{bian2022better,zheng2023first,zheng2024sms}. Note that $n+1$ is the size of the Pareto front of \omm\ and \lotz. If using a population size $\mu < n+1$, the original algorithms without archive obviously cannot find the Pareto front.
For each $n$, we run an algorithm $1000$ times independently, and record the average number of fitness evaluations until the Pareto front is found. In case where the Pareto front cannot be found in acceptable running time, 
we set the maximum number of fitness evaluations to $5\times 10^4$ for \omm\ and $2\times 10^5$ for \lotz.
We can observe from 
Figure~\ref{fig:experiment} that using an archive brings a clear acceleration.

We can also observe that when using a population size of $n+1$, SMS-EMOA performs much better than NSGA-II. The main reason is that for NSGA-II, when two Pareto optimal solutions corresponding to one objective vector (e.g., two solutions corresponding to one boundary point in the Pareto front)  have been found, they both can have fairly large crowding distance, and thus occupy two slots in the population. Then, the population size of $n+1$ is not sufficient to ensure the preservation of objective vectors in the Pareto front. 
However, for SMS-EMOA,  these duplicate  Pareto optimal solutions  have a zero hypervolume contribution, and thus are less preferred, implying that they will not affect the preservation of other objective vectors in the Pareto front. Meanwhile, since SMS-EMOA only removes one solution in each generation, the objective vector corresponding to these duplicate solutions will also be preserved. Thus, a population size of $n+1$ is sufficient for SMS-EMOA to preserve the whole Pareto front.

\section{Conclusion}
In this paper, we perform a first theoretical study for MOEAs with an archive, an increasingly popular practice in the design of MOEAs. Through rigorous running time analysis for NSGA-II and SMS-EMOA solving two commonly studied bi-objective problems, \omm\ and \lotz, we prove that using an archive can allow a constant population size, bringing an acceleration of factor $\Theta(n)$. This is also verified by the experiments. Our results provide theoretical confirmation for the benefit of using an archive to store non-dominated solutions generated during the search process of MOEAs, which has frequently been observed empirically~\cite{fieldsend2003using,bezerra2019}.
In the future, it would be interesting to derive the lower bounds of NSGA-II and SMS-EMOA without using an archive to make the comparison strict. Another interesting direction is studying real-world problems, e.g., multi-objective combinatorial optimization problems.

\section*{Acknowledgements} 
This work was supported by the National Science and Technology Major Project (2022ZD0116600) and National Science Foundation of China (62276124). Chao Qian is the corresponding author. The conference version of this paper has
appeared at IJCAI’24.

\bibliographystyle{named}
\bibliography{ijcai24-archive}

\newpage

\end{document}